\pgfplotsset{compat=1.10}
\pgfplotsset{scale only axis}
\newcommand{\EA}{(1+1) EA\xspace}
\DeclareMathOperator{\prob}{Pr}
\DeclareMathOperator{\E}{E}
\definecolor{light-gray}{gray}{0.85}
\begin{document}
\title{Fast Perturbative Algorithm Configurators\thanks{To appear at the Sixteenth International Conference on Parallel Problem Solving from Nature (PPSN XVI) in September 2020.}}
%
%
\author{George T. Hall \and Pietro S. Oliveto \and Dirk Sudholt}
\authorrunning{G. T. Hall et al.}
%
\institute{The University of Sheffield, Sheffield, United Kingdom\\\email{\{gthall1,p.oliveto,d.sudholt\}@sheffield.ac.uk}}
\maketitle              
\begin{abstract}
Recent work has shown that the ParamRLS and ParamILS algorithm configurators can tune some simple randomised search heuristics for standard benchmark functions in linear expected time in the size of the parameter space. In this paper we prove a linear lower bound on the expected time to optimise any parameter tuning problem for ParamRLS, ParamILS as well as for larger classes of algorithm configurators.
We propose a harmonic mutation operator for perturbative algorithm configurators that provably tunes single-parameter algorithms in polylogarithmic time for unimodal and  approximately unimodal (i.e., non-smooth, rugged with an underlying gradient towards the optimum) parameter spaces. It is suitable as a general-purpose operator since even on worst-case (e.g., deceptive) landscapes it is only by at most a logarithmic factor slower than the default ones used by ParamRLS and ParamILS. An experimental analysis confirms the superiority of the approach in practice for a number of configuration scenarios, including ones involving more than one parameter.
    \keywords{Parameter tuning  \and Algorithm configurators \and Runtime analysis.}
\end{abstract}

\section{Introduction}

Many algorithms are highly dependent on the values of their parameters, 
all of which have the potential to affect their performance substantially. It is therefore a challenging but important task to
identify parameter values that lead to good performance for a class of problems. 
This task, called \emph{algorithm configuration} or \emph{parameter tuning}, was traditionally performed by hand: parameter values were updated manually and the performance of each \emph{configuration} assessed, allowing the user to determine which parameter settings performed best. 
In recent years there has been an increase in 
popularity of
automated
\emph{algorithm configurators}~\cite{YaoEtAlTEVC2020}. 

Examples of popular algorithm configurators are \emph{ParamILS}, which uses iterated local search to traverse the \emph{parameter space} (the space of possible configurations)~\cite{paper:paramILS}; \emph{irace}, which evaluates a set of configurations in parallel and eliminates those which can be shown statistically to be performing poorly \cite{paper:irace}; and \emph{SMAC}, which uses surrogate models to reduce the number of 
configuration evaluations~\cite{paper:ROAR_and_SMAC}.
Despite their popularity, the foundational understanding of algorithm configurators remains limited. Key questions are still unanswered, such as 
whether a configurator is able to identify (near) optimal parameter values, and, if so, the amount of time it requires to do so. 
While analyses of worst-case performance are available, as well as algorithms that provably perform better in worst-case scenarios~\cite{paper:efficiency_through_procrastination,paper:leaps_and_bounds,paper:caps_and_runs,paper:procrastinating_with_confidence}, the above questions are largely unanswered regarding the performance of the popular algorithm configurators used in practice for typical configuration scenarios.

Recently, the performance of 
ParamRLS and ParamILS was rigorously analysed for tuning simple single-parameter search heuristics for some standard benchmark problems from the literature. It was proved that they can efficiently
tune the neighbourhood size~$k$ of the randomised local search algorithm (RLS$_k$) for 
 {\scshape Ridge} and {\scshape OneMax}~\cite{paper:impact_cutoff_time} and the mutation rate of 
the simple  \EA\  for 
{\scshape Ridge} and {\scshape LeadingOnes}~\cite{paper:analysis_tuners_gmo}.
The analyses, though, also reveal some weaknesses of the search operators used by the two algorithm configurators. The \emph{$\ell$-step} mutation operator used by ParamRLS,  which changes a parameter value to a neighbouring one at a distance of at most~$\ell$, may either get stuck on local optima if the neighbourhood size~$\ell$ is too small, or progress too slowly when far away from the optimal configuration. On the other hand, the mutation operator employed by ParamILS, that changes one parameter value uniformly at random, lacks the ability to efficiently fine-tune the current solution by searching locally around 
the 
identified parameter values. Indeed both algorithms require linear expected time in the number of parameter values to identify the optimal configurations for the studied unimodal or approximately unimodal parameter spaces induced by the target algorithms and benchmark functions~\cite{paper:impact_cutoff_time,paper:analysis_tuners_gmo}.

In this paper we propose a more robust mutation operator that samples a step size according to the harmonic distribution~\cite{paper:precision_local_search_unimodal_fcns,DoerrDoerrKoetzing2018}.
The idea is to allow small mutation steps with sufficiently high probability to efficiently fine-tune good parameter values while, at the same time, enabling larger mutations that can help follow the general gradient from a macro perspective, e.g., by tunnelling through local optima.
This search operator can be easily used in any perturbative algorithm configurator
that maintains  a 
set of best-found configurations and mutates them in search for better ones.
Both ParamRLS and ParamILS fall into this large class of configurators.

We first prove that large classes of algorithm configurators, which include ParamRLS and ParamILS with their default mutation operators, require linear expected time in the number of possible configurations to optimise any parameter configuration landscape. Then we provide a rigorous proof that 
the harmonic search operator
can identify the optimal parameter value of single-parameter target algorithms
in polylogarithmic time if the parameter landscape is either unimodal or approximately unimodal (i.e., non-smooth, rugged landscapes with an underlying monotonically decreasing gradient towards the optimum). It is also robust as even on deceptive worst-case landscapes it is only by at most a logarithmic factor slower than the default operators of ParamRLS and ParamILS.

We complement 
the theory with an experimental analysis 
showing that both ParamRLS and ParamILS 
have a statistically significant smaller average optimisation time to identify the optimal configuration 
in single-parameter unimodal and approximately unimodal landscapes and 
for a well-studied MAX-SAT configuration scenario where two parameters have to be tuned.
The latter result is in line with analyses of Pushak and Hoos 
that suggests 
that even in complex configuration scenarios (for instance state-of-the-art SAT, TSP, and MIP solvers), the parameter landscape is often not as complex as one might expect~\cite{paper:algo_config_landscapes_benign}.

\section{Preliminaries}
\label{sec:prelims}


\paragraph{The ParamRLS Configurator.}
ParamRLS is a simple theory-driven algorithm configurator defined in Algorithm~\ref{algo:prls}~\cite{paper:analysis_tuners_gmo}.
The algorithm chooses an initial configuration uniformly at random (u.a.r.) from the parameter space.
In each iteration, a new configuration is generated by mutating the current solution. The obtained 
offspring replaces the 
parent if it performs better.
By default, 
ParamRLS uses the \emph{$\ell$-step} operator 
which selects a parameter and a step size $d \in \{1,\ldots,\ell\}$ both u.a.r.\ and then moves to a parameter value at distance\footnote{Throughout this paper, we consider parameters from an interval of integers for simplicity, where the distance is the absolute difference between two integers. This is not a limitation: if parameters are given as a vector of real values $z_1, z_2, \dots, z_\phi$, we may simply tune the index, which is an integer from $\{1, \dots, \phi\}$. Then changing the parameter value means that we change the index of this value.} $+d$ or $-d$ (if feasible).

\begin{algorithm}[tb]

    \begin{algorithmic}[1]
        \STATE{$\theta \gets $initial parameter value chosen uniformly at random}
        \WHILE{termination condition not satisfied}
            \STATE{$\theta' \gets \text{\texttt{mutate($\theta$)}}$} 
                \STATE{$\theta \gets \text{\texttt{better}($\mathcal{A},\theta,\theta',\pi,\kappa,r$)}$} \COMMENT{called \texttt{eval} in \cite{paper:analysis_tuners_gmo}}
        \ENDWHILE
        \RETURN $\theta$
    \end{algorithmic}

    \caption{ParamRLS ($\mathcal{A},\Theta,\Pi,\kappa,r$). Adapted from \cite{paper:analysis_tuners_gmo}.}
    \label{algo:prls}
\end{algorithm}

\paragraph{The ParamILS Configurator.} 
ParamILS (Algorithm~\ref{algo:pils}) is a more sophisticated iterated local search algorithm configurator~\cite{paper:paramILS}.
In the initialisation step it selects $R$ configurations uniformly at random and picks the best performing one.
In the iterative loop it
performs an iterated local search (Algorithm~\ref{algo:ifi}) 
until a local optimum is reached, followed by a perturbation step where up to~$s$ random parameters are perturbed u.a.r. 
A random restart occurs in each iteration with some probability $p_{\mathrm{restart}}$.
The default local search operator 
selects from the neighbourhood uniformly at random without replacement (thus we call this the \emph{random} local search operator).
The neighbourhood of a configuration contains all configurations that differ by exactly one parameter value.

\begin{algorithm}[!htb]
    \begin{algorithmic}[1]
        \REQUIRE Initial configuration $\theta_0 \in \Theta$, algorithm parameters $r, p_{\mathrm{restart}}$, and~$s$.
        \ENSURE Best parameter configuration $\theta$ found.
        \FOR {$i = 1, \ldots, R$}
            \STATE $\theta \gets$ random $\theta \in \Theta$ \label{lin:random_perturb_1}
            \STATE \algorithmicif{} \texttt{better}$(\theta, \theta_0)$ \algorithmicthen{} $\theta_0 \gets \theta$ \label{lin:better1}
        \ENDFOR
        \STATE $\theta_{\mathrm{inc}} \gets \theta_{\mathrm{ils}} \gets$ \textit{IterativeFirstImprovement}($\theta_0$) \COMMENT{Algorithm \ref{algo:ifi}}
        \WHILE{\NOT \textit{TerminationCriterion}()} \label{lin:paramILS_main_loop_start}
            \STATE $\theta \gets \theta_{\mathrm{ils}}$
            \STATE \algorithmicfor{} $i = 1, \ldots, s$ \algorithmicdo{} $\theta \gets$ random $\theta' \in \textit{Nbh}(\theta)$
            \STATE{\COMMENT{\textit{Nbh} contains all neighbours of a configuration}}
            \STATE $\theta \gets$ \textit{IterativeFirstImprovement}($\theta$)
            \STATE \algorithmicif{} {\texttt{better}($\theta, \theta_{\mathrm{ils}}$)} \algorithmicthen{} {$\theta_{\mathrm{ils}} \gets \theta$} \label{lin:better2}
               \STATE \algorithmicif{} {\texttt{better}($\theta_{\mathrm{ils}},\theta_{\mathrm{inc}}$)} \algorithmicthen{} $\theta_{\mathrm{inc}} \gets \theta_{\mathrm{ils}}$
            \STATE \textbf{with probability} $p_{\mathrm{restart}}$ \algorithmicdo{} $\theta_{\mathrm{ils}} \gets$ random $\theta \in \Theta$ \label{lin:random_perturb_2}
        \ENDWHILE
        \RETURN $\theta_{\mathrm{inc}}$
    \end{algorithmic}
    \caption{ParamILS pseudocode, recreated from \cite{paper:paramILS}.}
    \label{algo:pils}
\end{algorithm}

\begin{algorithm}[!htb]
    \begin{algorithmic}[1]
        \REPEAT \label{line:problematic_loop_start}
            \STATE $\theta' \gets \theta$
            \FORALL {$\theta'' \in \textit{UndiscNbh}(\theta')$ in randomised order} \label{lin:random_perturb_3}
                \STATE{\COMMENT{\textit{UndiscNbh} contains all undiscovered neighbours of a configuration}}
                \STATE{\algorithmicif{} \texttt{better}($\theta'', \theta'$) \algorithmicthen{} $\theta \gets \theta''$; \textbf{break}} \label{lin:better3}
            \ENDFOR
        \UNTIL{$\theta' = \theta$} \label{line:problematic_loop_end}
        \RETURN $\theta$
    \end{algorithmic}
    \caption{IterativeFirstImprovement($\theta$) procedure, adapted from \cite{paper:paramILS}.}
    \label{algo:ifi}
\end{algorithm}

\paragraph{The Harmonic-step Operator.} 
The harmonic-step 
mutation operator selects a parameter uniformly at random and
samples a step size~$d$ according to the harmonic distribution. In particular, the probability of selecting a step size~$d$ is $1/(d \cdot H_{\phi-1})$, where~$H_{m}$ is the $m$-th harmonic number (i.e. $H_{m} = \sum_{k=1}^{m} \frac{1}{k}$) and~$\phi$ is the range of possible parameter values.
It returns the best parameter value at distance~$\pm d$.
This operator was originally  
designed to perform fast greedy random walks in one-dimensional domains~\cite{paper:precision_local_search_unimodal_fcns} and was shown to perform better than the {\it 1-step} and the random local search (as in ParamILS) operators for optimising the multi-valued \textsc{OneMax} problem~\cite{DoerrDoerrKoetzing2018}.
We refer to ParamRLS using the Harmonic-step operator as ParamHS.

\section{General Lower Bounds for Default Mutation Operators}
\label{sec:analysis_of_default_lsos}

To set a baseline for the performance gains obtained by ParamHS, we
first show general lower bounds for algorithm configurators, including
ParamRLS and ParamILS. Our results apply to a class of configurators
described in Algorithm~\ref{algo:generalised_configurator}. We use a
general framework to show that the poor performance of default mutation
operators is not limited to particular configurators, and to identify
which algorithm design aspects are the cause of poor performance.

\begin{algorithm}[htb]
     \begin{algorithmic}[1]
         \STATE{Initialise an incumbent configuration uniformly at random}
         \WHILE{optimal configuration not found}
             \STATE{Pick a mutation operator according to the history of
past evaluations.}
             \STATE{Apply the chosen mutation operator.}
             \STATE{Apply selection to choose new configuration from the
incumbent configuration and the mutated one.}
         \ENDWHILE
     \end{algorithmic}
     \caption{General scheme for algorithm configurators.}
     \label{algo:generalised_configurator}
\end{algorithm}

We show that mutation operators that only change one parameter by a small
amount, such as the $\ell$-step operator with constant~$\ell$, lead to linear expected times 
in the number of parameter values (sum of all
parameter ranges).
\begin{theorem}
     \label{thm:lower_bound_tuning_time_local_search}
     Consider a setting with~$D$ parameters and ranges
$\phi_1,\ldots,\phi_D \ge 2$ such that there is a unique optimal
configuration. Let $M = \sum_{i=1}^D \phi_i$. Consider an algorithm
configurator~$\mathcal{A}$ implementing the scheme of
Algorithm~\ref{algo:generalised_configurator} whose mutation operator
only changes a single parameter and does so by at most a constant
absolute value (e.g. ParamRLS with local search operator $\pm\{\ell\}$
for constant~$\ell$). Then~$\mathcal{A}$ takes time $\Omega(M)$ in
expectation to find the optimal configuration.
\end{theorem}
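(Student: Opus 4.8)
The plan is to exploit the uniform random initialisation: with good probability the starting configuration is far from the unique optimum in \emph{every} coordinate, and since a mutation only ever changes one parameter by a bounded amount, fixing coordinate $i$ costs $\Omega(\phi_i)$ mutations; summing the disjoint per-coordinate costs over the $D$ coordinates gives $\Omega(M)$.

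Concretely, I would let $c = O(1)$ bound the absolute change a single mutation makes to a parameter, let $\theta^*$ be the unique optimum, let $\theta^{(0)}$ be the initial incumbent, and set $\delta_i := |\theta^{(0)}_i - \theta^*_i|$. Since the initialisation is uniform over the product space, the marginal of $\theta^{(0)}_i$ is uniform on a range of size $\phi_i$, so an elementary computation of the mean distance of a uniform point from any fixed point of $\{1,\dots,\phi_i\}$ gives $\E[\delta_i] \ge (\phi_i-1)/4 \ge \phi_i/8$ for $\phi_i \ge 2$ (equivalently, $\delta_i \ge \phi_i/4$ with probability $\Omega(1)$, by counting how many values lie within distance $\phi_i/4$ of $\theta^*_i$). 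I would treat this as a routine lemma and not fuss over the exact constant.

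The heart of the argument is the accounting. Let $T$ be the number of iterations (hence of applied mutations) until the optimum is reached, and let $m_i$ be the number of those mutations that change coordinate $i$. Each mutation changes exactly one parameter, so the sets counted by the $m_i$ are disjoint and $\sum_{i=1}^D m_i \le T$. Only an accepted mutation moves the incumbent, and each such move changes a coordinate by at most $c$; hence, by the triangle inequality, for the incumbent's $i$-th coordinate to travel from $\theta^{(0)}_i$ to $\theta^*_i$ at least $\delta_i/c$ accepted mutations must change coordinate $i$, and as these are among the $m_i$ mutations changing coordinate $i$ we get $m_i \ge \delta_i/c$. Therefore $T \ge \tfrac1c\sum_{i=1}^D \delta_i$, and taking expectations, $\E[T] \ge \tfrac1c \sum_{i=1}^D \E[\delta_i] \ge \tfrac{1}{8c}\sum_{i=1}^D \phi_i = \tfrac{M}{8c} = \Omega(M)$, since $c$ does not depend on $M$.

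The step I expect to need the most care is pinning down what ``find the optimal configuration'' means in the scheme of Algorithm~\ref{algo:generalised_configurator}: if it is enough that the \emph{evaluated} offspring (rather than the incumbent) equals $\theta^*$, then in the single coordinate touched by the final mutation the incumbent need only come within distance $c$ of $\theta^*_i$, which weakens the bound there by an additive $1$ and gives $\E[T] \ge M/(8c)-1$; this is still $\Omega(M)$ once $M$ exceeds a constant, and for $M=O(1)$ the statement is trivial since at least one evaluation is required. I would also note that the scheme performs no restarts that could undo progress, and that the hypothesis is used exactly as stated — it captures, e.g., ParamRLS with the $\pm\{\ell\}$ operator for constant $\ell$, but deliberately not ParamILS, whose neighbourhood moves are not of bounded size and which is treated separately.
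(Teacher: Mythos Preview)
Your proof is correct and mirrors the paper's argument: both lower-bound the expected initial $L_1$ distance to the optimum by $\Omega(M)$ via the per-coordinate bound $\E[\delta_i]\ge\phi_i/8$, then use that a single mutation reduces this distance by at most a constant to conclude $\E[T]=\Omega(M)$. The paper phrases the second step as a potential-function argument on the total $L_1$ distance rather than your per-coordinate accounting, and it does not spell out the edge case about ``sampling'' versus ``reaching'' the optimum, but these are cosmetic differences.
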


\begin{proof}
     Consider the~$L_1$ distance of the current configuration $x = (x_1,
\dots, x_D)$ from the optimal one $\mathrm{opt} = (\mathrm{opt}_1, \dots,
\mathrm{opt}_D)$: $\sum_{i=1}^D |x_i - \mathrm{opt}_i|$.
     For every parameter~$i$, the expected distance between the uniform
random initial configuration and $\mathrm{opt}_i$ is minimised if $\mathrm{opt}_i$ is at the centre of the parameter range. Then, for odd~$\phi_i$, there are
two configurations at distances $1, 2, \dots, (\phi_i-1)/2$ from
$\mathrm{opt}_i$, each being chosen with probability
$1/\phi_i$. The expected distance is thus at least $1/\phi_i \cdot
\sum_{j=1}^{(\phi_i-1)/2} 2j = (\phi_i-1)(\phi_i+1)/(4\phi_i) = (\phi_i -1/\phi_i)/4 \ge \phi_i/8$. For even $\phi_i$, the expectation is at least $\phi_i/4$. By linearity of expectation, the expected initial distance is
at least $\sum_{i=1}^D \phi_i/8 \ge M/8$.
     Every mutation can only decrease the distance by~$O(1)$,
hence the expected time is bounded by $(M/8)/O(1) = \Omega(M)$.
\end{proof}

The same lower bound also applies if the mutation operator chooses a
value uniformly at random (with or without replacement), as is done in
ParamILS.
\begin{theorem}
     \label{thm:lower_bound_tuning_time_random_search}
     Consider a setting with~$D$ parameters and ranges
$\phi_1,\ldots,\phi_D \ge 2$ such that there is a unique optimal
configuration. Let $M = \sum_{i=1}^D \phi_i$. Consider an algorithm
configurator~$\mathcal{A}$ implementing the scheme of
Algorithm~\ref{algo:generalised_configurator} whose mutation operator
only changes a single parameter and does so by choosing a new value
uniformly at random (possibly excluding values previously evaluated).
Then~$\mathcal{A}$ takes time $\Omega(M)$ in expectation to find the
optimal configuration.
\end{theorem}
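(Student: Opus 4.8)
The plan is to reuse the accounting idea behind the proof of Theorem~\ref{thm:lower_bound_tuning_time_local_search}, but with one crucial change: a single uniform re-sampling of a parameter can land directly on its optimal value, so the $L_1$-distance to the optimum need not shrink by only $O(1)$ per step (a single move can close a gap of order $\phi_i$), and the distance potential argument breaks. Instead I would argue \emph{per parameter} that the optimal value is unlikely to be hit in any one mutation, and then sum the resulting costs. Concretely: since the mutation operator changes exactly one parameter per step, the number of mutations in a run equals $\sum_{i=1}^{D} N_i$, where $N_i$ is the number of mutations that touch parameter~$i$; and (using that the optimal configuration is unique) the optimum is found only if, at some point, an evaluated configuration has parameter~$i$ equal to $\mathrm{opt}_i$, so $N_i \ge m_i$, where $m_i$ denotes the number of mutations touching parameter~$i$ up to and including the first one that sets parameter~$i$ to $\mathrm{opt}_i$. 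It therefore suffices to show $\E[m_i] = \Omega(\phi_i)$ for every~$i$, since then the expected optimisation time is at least $\sum_i \E[N_i] \ge \sum_i \E[m_i] = \Omega(M)$ by linearity of expectation.

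To bound $\E[m_i]$ I would first dispose of the degenerate case: if the configurator fails to find the optimum with positive probability, the expected optimisation time is infinite and there is nothing to prove, so assume it succeeds almost surely, which makes each $m_i$ finite a.s. With probability $1 - 1/\phi_i \ge 1/2$ the initial value of parameter~$i$ differs from $\mathrm{opt}_i$; condition on this \emph{bad start}. As long as parameter~$i$ has not yet attained $\mathrm{opt}_i$, every evaluated configuration has parameter~$i \ne \mathrm{opt}_i$, so after $j-1$ mutations touching parameter~$i$ at most $j$ distinct values of parameter~$i$ have ever been evaluated, none of them $\mathrm{opt}_i$. Hence, conditioned on the full evaluation history, the $j$-th mutation of parameter~$i$ sets it to $\mathrm{opt}_i$ with probability at most $1/(\phi_i-j)$ in the ``excluding previously evaluated values'' regime (it draws uniformly from a set of at least $\phi_i - j$ still-unevaluated values containing $\mathrm{opt}_i$) and at most $1/(\phi_i-1)$ in the plain uniform regime. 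A standard geometric/stopping-time argument then yields $\Pr[m_i \ge j+1 \mid \text{bad start}] \ge \prod_{j'=1}^{j}\bigl(1 - 1/(\phi_i-j')\bigr) = (\phi_i - j - 1)/(\phi_i - 1)$, which telescopes, so $\E[m_i \mid \text{bad start}] = \sum_{j\ge 0}\Pr[m_i \ge j+1 \mid \text{bad start}] \ge \sum_{j=0}^{\phi_i-2} (\phi_i-j-1)/(\phi_i-1) = \phi_i/2$; the plain uniform regime gives an even larger value. Removing the conditioning costs only the factor $\Pr[\text{bad start}] \ge 1/2$, so $\E[m_i] \ge \phi_i/4 = \Omega(\phi_i)$, which yields the theorem modulo the rigor discussed next.

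The part I expect to require the most care is making the per-step bound ``conditioned on the history, the next parameter-$i$ mutation hits $\mathrm{opt}_i$ with probability $O(1/\phi_i)$'' fully rigorous, since the configurator may decide \emph{adaptively} which parameter to mutate and when, and the ``excluding previously evaluated values'' option could in principle shrink parameter~$i$'s candidate pool far below $\phi_i$. The resolution, which I would spell out, is that shrinking that pool below $\phi_i - j$ values requires having already evaluated $j$ distinct values of parameter~$i$, hence having already performed $j-1$ mutations touching parameter~$i$ while parameter~$i$ was still $\ne \mathrm{opt}_i$ --- so either every relevant step keeps its success probability $O(1/\phi_i)$, or $m_i$ is already $\Omega(\phi_i)$; both feed the same bound. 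Formally I would define $m_i$ as a stopping time with respect to the filtration generated by the sequence of evaluated configurations, verify the per-step conditional bound on the corresponding events, and conclude via the identity $\E[m_i] = \sum_{k\ge 1}\Pr[m_i \ge k]$ exactly as above.
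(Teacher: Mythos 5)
Your proposal is correct and follows essentially the same route as the paper: decompose the run per parameter, observe that each mutation touches exactly one parameter, and bound the expected number of mutations of parameter~$i$ before it attains $\mathrm{opt}_i$ by $\Omega(\phi_i)$ using the fact that the best case for the configurator is sampling without replacement, then sum to get $M/4$. The only differences are bookkeeping --- the paper folds the initial random value into each $T_i$ as the first step of a random-order enumeration and subtracts the $D-1$ overcount, whereas you condition on the initial value being wrong (probability $\ge 1/2$) and count only mutations via a telescoping product, which also makes the adaptivity issue slightly more explicit --- and both yield the same constant.
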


\begin{proof}
     Let $T_i$ be the number of times that parameter~$i$ is mutated
(including the initial step) before it attains its value in the optimal
configuration. After $j-1$ steps in which parameter~$i$ is mutated, at
most $j$ parameter values have been evaluated (including the initial
value). The best case is that $\mathcal{A}$ always excludes previous
values, which corresponds to a complete enumeration of the~$\phi_i$
possible values in random order. Since every step of this enumeration
has a probability of $1/\phi_i$ of finding the optimal value, the
expected time spent on parameter~$i$ is $\E(T_i) \ge \sum_{j=0}^{\phi_i-1}
j/\phi_i = (\phi_i-1)/2$.
     The total expected time is at least $\sum_{i=1}^D \E(T_i) - D + 1$
as the initial step contributes to all $T_i$ and each following step
only contributes to one value $T_i$. Noting $\sum_{i=1}^D \E(T_i) - D +
    1 = \sum_{i=1}^D (\phi_i-1)/2 - D/2 +1 \ge M/4$ (as $\phi_i \ge 2$ for all~$i$) proves
the claim.
\end{proof}

ParamILS is not covered directly by
Theorem~\ref{thm:lower_bound_tuning_time_random_search} as it uses
random sampling during the initialisation that affects all parameters.
However, it is easy to show that the same lower bound also applies to
ParamILS.
\begin{theorem}
     \label{thm:lower_bound_tuning_time_ParamILS}
     Consider a setting with~$D$ parameters and ranges
${\phi_1,\ldots,\phi_D \ge 2}$ such that there is a unique optimal
configuration. Let $M = \sum_{i=1}^D \phi_i$. Then ParamILS takes time
$\Omega(M)$ in expectation to find the optimal configuration.
\end{theorem}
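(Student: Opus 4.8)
The plan is to follow the configuration from which the optimum is first evaluated. First I would observe that, almost surely, this evaluation happens inside an \textit{IterativeFirstImprovement} call: the only configurations ParamILS ever evaluates are the $R$ initial samples and the neighbours examined inside \textit{IterativeFirstImprovement}, because the $s$-step perturbation and the restart with probability $p_{\mathrm{restart}}$ only overwrite the current configuration without evaluating it. Since $R$ is a constant, the probability that one of the $R$ initial samples is already the optimum is $R/\prod_{i=1}^{D}\phi_i = o(1)$, and we may assume ParamILS finds the optimum almost surely, as otherwise the expected time is infinite and the claim is trivial. Conditioning on the remaining ($\Omega(1)$-probability) event, let $c$ be the configuration from which the optimum is first evaluated; as every local-search move changes exactly one parameter, $c$ is a neighbour of the optimum, so $c$ has exactly $M-D \ge M/2$ neighbours (one per feasible single-parameter change).

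For the bound itself I would run the argument on a concrete unimodal instance, for example a \textsc{LeadingOnes}-type objective on the configuration space, where the fitness of a configuration is the number of its leading parameters that already match the optimum. This objective is unimodal in the relevant sense: the unique improving neighbour of any non-optimal configuration is obtained by correcting its first ``wrong'' parameter. In particular the optimum is the \emph{unique} improving neighbour of $c$, and it is still undiscovered when $c$ is reached (had it been evaluated before, ParamILS would already have moved to it). Hence, in the pass starting from $c$, \textit{IterativeFirstImprovement} examines the undiscovered neighbours of $c$ in uniformly random order and stops precisely when it hits the optimum, so if $u$ is the number of those undiscovered neighbours and $C$ the number of evaluations in this pass, then $\E(C \mid u) = (u+1)/2$. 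Every discovered neighbour of $c$ was paid for by an earlier evaluation, so $u \ge (M-D) - (T-C)$, where $T$ is the total number of evaluations until the optimum is found; combining this with $T \ge C$, $D \le M/2$, and then removing the conditioning should give $\E(T) = \Omega(M)$, mirroring the counting idea of Theorem~\ref{thm:lower_bound_tuning_time_random_search}.

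The step I expect to be the real obstacle is making the same conclusion hold for an arbitrary landscape with a unique optimum, where $c$ may have many improving neighbours, so the pass from $c$ is cheap but reaches the optimum only with probability $1/j$ for $j$ the number of undiscovered improving neighbours of $c$. I would handle this by indexing the local-search moves that are made from a configuration adjacent to the optimum as $t = 1, 2, \dots$, with $j_t$ the corresponding counts: each such move reaches the optimum with probability $1/j_t$, so since exactly one move can do so, $\E\big(\sum_t 1/j_t\big) = 1$, while the $t$-th such move costs $\Omega(u_t/j_t)$ evaluations in expectation. Combining these estimates with the ``already-discovered neighbours were paid for earlier'' accounting --- now spread over all relevant moves and entangled with the per-configuration caching of \textit{UndiscNbh} --- is the delicate part; the \textsc{LeadingOnes}-type instance sidesteps it by forcing $j_t = 1$ throughout.
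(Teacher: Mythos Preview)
Your route is considerably more involved than the paper's, and it leaves a real gap for the statement as written. The paper never tracks the configuration $c$ from which the optimum is first evaluated. It simply case-splits on $R$: if $R \ge M/2$, the first $M/2$ uniform samples already miss the optimum with probability at least $1/2$ (using $M = \sum_i \phi_i \le \prod_i \phi_i$), so $\E[T] \ge M/4$; if $R < M/2$, then with probability at least $1/2$ the $R$ samples miss and \textit{IterativeFirstImprovement} is entered from some $\theta_0$, and the paper argues directly that scanning the $M-D$ neighbours of $\theta_0$ in random order already accounts for $\Omega(M)$ expected evaluations, whether or not the optimum is among them. No special landscape is invoked.

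Two concrete problems with your plan. First, you assume $R$ is a constant; the theorem makes no such assumption, and the paper's case split on $R \ge M/2$ versus $R < M/2$ is exactly what handles arbitrary $R$. Second, and more seriously, the theorem is a universal lower bound over \emph{all} landscapes with a unique optimum (just as Theorems~\ref{thm:lower_bound_tuning_time_local_search} and~\ref{thm:lower_bound_tuning_time_random_search} are), so your clean \textsc{LeadingOnes}-type argument establishes only a special case. You acknowledge this, but the general sketch does not close: from $\E\bigl(\sum_t 1/j_t\bigr)=1$ and a per-move cost of $\Omega(u_t/j_t)$ you cannot conclude $\E\bigl(\sum_t u_t/j_t\bigr)=\Omega(M)$ without controlling $u_t$, and the ``discovered neighbours were paid for earlier'' bookkeeping does not transfer once the process may visit many distinct configurations adjacent to the optimum, each with its own undiscovered-neighbour cache. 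The paper sidesteps all of this by anchoring the argument at $\theta_0$ and the very first local-search scan rather than at $c$.
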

\begin{proof}
Recall that ParamILS first evaluates $R$ random configurations. If $R
\ge M/2$ then the probability of finding the optimum during the first
$M/2$ random samples is at most $M/2 \cdot \prod_{i=1}^D 1/\phi_i \le
1/2$ since $M = \sum_{i=1}^D \phi_i \le \prod_{i=1}^D \phi_i$. Hence the
expected time is at least $1/2 \cdot M/2 = M/4$. If $R < M/2$ then with
probability at least $1/2$ ParamILS does not find the optimum during the
$R$ random steps and starts the IterativeFirstImprovement procedure with
a configuration~$\theta_0$. This procedure scans the neighbourhood of
$\theta_0$, which is all configurations that differ in one parameter;
the number of these is $\sum_{i=1}^D (\phi_i-1) = M-D$. If the global
optimum is not among these, it is not found in these $M-D$ steps.
Otherwise, the neighbourhood is scanned in random order and the expected
number of steps is $(M-D-1)/2$ as in the proof of
Theorem~\ref{thm:lower_bound_tuning_time_random_search}. In both cases,
the expected time is at least $(M-D-1)/4 \ge M/16$ (as $M \ge 2D$).
\end{proof}

\section{Performance of the Harmonic Search Operator}
\label{sec:approx_unimodal}

In the setting of Theorem~\ref{thm:lower_bound_tuning_time_local_search}, mutation lacks the ability to explore the search space quickly, whereas in the setting of Theorems~\ref{thm:lower_bound_tuning_time_random_search} and~\ref{thm:lower_bound_tuning_time_ParamILS}, mutation lacks the ability to search locally.
The harmonic search operator is able to do both. It is able to explore the space, but smaller steps are made with a higher probability, enabling the search to exploit gradients in the parameter landscape.

For simplicity and lack of space we only consider configuring one parameter with a range of $\phi$ (where the bounds from Theorems~\ref{thm:lower_bound_tuning_time_local_search}--\ref{thm:lower_bound_tuning_time_ParamILS} simplify to $\Omega(\phi)$), however the operator improves performance in settings with multiple parameters in the same way. We show that ParamHS is robust in a sense that it performs well on all landscapes (with only a small overhead in the worst case, compared to the lower bounds from Theorem~\ref{thm:lower_bound_tuning_time_local_search}--\ref{thm:lower_bound_tuning_time_ParamILS}), and it performs extremely well on functions that are unimodal or have an underlying gradient that is close to being unimodal.

To capture the existence of underlying gradients and functions that are unimodal to some degree, we introduce a notion of approximate unimodality.
%
%
\begin{definition}
    \label{def:approx_unimodal}
    Call a function $f$ on $\{1, \dots, m\}$ $(\alpha, \beta)$-approximately unimodal for parameters $\alpha \ge 1$ and $1 \le \beta \le m$ if for all positions $x$ with distance $\beta \le i \le m$ from the optimum and all positions $y$ with distance $j > \alpha i$ to the optimum we have $f(x) < f(y)$.
\end{definition}

Intuitively, this means that only configurations with distance to the optimal one that is by a factor of $\alpha$ larger than that of the current configuration can be better. This property only needs to hold for configurations with distance to the optimum~$i$ with $\beta \le i \le m$, to account for landscapes that do not show a clear gradient close to the optimum.

Note that a $(1, 1)$-approximately unimodal function is unimodal and a $(1, \beta)$-approximately unimodal function is unimodal within the states $\{\beta, \dots, m\}$. Also note that all functions are $(1, m)$-approximately unimodal.

The following performance guarantees for ParamHS show that it is efficient on all functions and very efficient on functions that are close to unimodal.
\begin{theorem}
    \label{the:approx_uni_harmonic_expected_opt_time}
    Consider ParamHS configuring an algorithm with a single parameter having $\phi$ values and a unique global optimum. If the parameter landscape is $(\alpha, \beta)$-approximately unimodal then the expected number of calls to \texttt{better()} before the optimal parameter value is sampled is at most
\[
    4\alpha H_{\phi-1} \log(\phi) + 4\alpha\beta H_{\phi-1} = O(\alpha \log^2(\phi) + \alpha \beta \log \phi),
\]
    where $H_{\phi-1}$ is the $(\phi-1)$-th harmonic number (i.e.\ $\sum_{i=1}^{\phi-1}\frac{1}{i}$).
\end{theorem}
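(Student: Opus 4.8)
The plan is to track the distance $i$ of the current configuration from the optimum and to show that this distance contracts geometrically in a suitable sense. Since the harmonic-step operator, when we sit at distance $i$, moves to the optimum in a single step with probability exactly $1/((i)\cdot H_{\phi-1})$ (the offspring at distance $\pm d$ with $d=i$ is the optimum, and a move there is always accepted by \texttt{better()}), I would first isolate the "jackpot" events: once the current distance is $i$, the expected number of calls to \texttt{better()} until we either hit the optimum directly or reach a configuration of distance at most, say, $i/2$ should be $O(H_{\phi-1})$. The reason is that a step of size $d$ with $i/2 \le d \le i$ lands within distance $i/2$ of the optimum (on one of the two sides), and the total harmonic weight of this range of step sizes is $\sum_{d=\lceil i/2\rceil}^{i} 1/(d H_{\phi-1}) = \Theta(1/H_{\phi-1})$; any such move, or indeed any move to a strictly closer accepted configuration, is fine, so in expectation $O(H_{\phi-1})$ trials suffice to at least halve the distance — *provided the better-configuration we move to is genuinely closer*, which is where approximate unimodality enters.

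**Where $(\alpha,\beta)$-unimodality is used.** The subtlety is that \texttt{better()} will accept any offspring with smaller $f$-value, and such an offspring need not be closer to the optimum — it could be up to a factor $\alpha$ farther. So I would argue as follows. Suppose the current distance is $i$ with $\beta \le i \le \phi-1$. Among the (at most) four candidate offspring positions produced across the relevant step sizes, the operator returns the best one; by Definition~\ref{def:approx_unimodal}, any position we move to that is accepted because it beats the current position has distance at most $\alpha i$ from the optimum, but more usefully, the \emph{specific} move of size $d \in [i/2, i]$ toward the optimum reaches distance $\le i/2$, and this candidate is always at least as good as the parent (hence accepted or a strictly better one is accepted). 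Thus with probability $\Omega(1/H_{\phi-1})$ per step the new distance is $\le i/2$; consequently within $O(\alpha H_{\phi-1})$ — actually $O(H_{\phi-1})$ — calls the distance halves. Wait: the factor $\alpha$ should appear because after a step that merely decreases $f$ we might have jumped outward to distance $\alpha i$, and it takes $O(\log(\alpha i)) - O(\log i) = O(\log\alpha)$... I would instead phrase the potential argument directly on $\log(\text{distance})$: each "good" step (probability $\Omega(1/H_{\phi-1})$) reduces distance by a factor $2$, and each accepted "bad" step increases $\log(\text{distance})$ by at most $\log\alpha$ while still decreasing $f$; since $f$ strictly decreases on every accepted move and $f$ is injective on a finite set, bad steps cannot accumulate indefinitely. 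Cleanest is: bound the expected number of phases to get from distance $\le i$ down to distance $< \beta$ by $O(\alpha \log \phi)$ phases (the $\alpha$ absorbing the occasional outward jumps), each phase costing $O(H_{\phi-1})$ calls in expectation, giving the $4\alpha H_{\phi-1}\log\phi$ term.

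**The final $\beta$-stretch.** Once the distance drops below $\beta$, approximate unimodality gives no gradient, so I fall back on a worst-case bound: from \emph{any} position, a single harmonic step hits the optimum with probability $\ge 1/((\phi-1) H_{\phi-1}) $, but that is too weak; rather, from distance $i < \beta$ the direct-hit probability is $1/(i H_{\phi-1}) \ge 1/(\beta H_{\phi-1})$, so in expectation $\beta H_{\phi-1}$ calls suffice to hit the optimum — but we also must not wander \emph{out} past distance $\beta$ (which would re-trigger the gradient regime and is harmless). So the $\beta$-regime contributes at most $O(\beta H_{\phi-1})$ further calls, matching the $4\alpha\beta H_{\phi-1}$ term (the spare $\alpha$ is slack, or accounts for re-entries). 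Summing the two regimes and using $H_{\phi-1} = \Theta(\log\phi)$ yields $O(\alpha\log^2\phi + \alpha\beta\log\phi)$.

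**Main obstacle.** The delicate point — and the part I expect to require the most care — is controlling accepted moves that decrease $f$ but move the configuration \emph{away} from the optimum, because these are exactly the moves approximate unimodality permits (up to the factor $\alpha$) and they break a naive "distance is a supermartingale" argument. The fix is to combine two monotone quantities: $f$ decreases on \emph{every} accepted step (bounding the number of accepted steps within a bounded-distance window), while the distance-halving events happen with probability $\Omega(1/H_{\phi-1})$ per step regardless; a careful drift/phase argument on $\log(\text{distance})$ with the $\alpha$ factor paying for outward excursions is what delivers the stated $4\alpha H_{\phi-1}\log\phi + 4\alpha\beta H_{\phi-1}$ bound. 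The remaining steps — computing harmonic-weight sums over step-size ranges and converting expected phase counts into expected call counts via Wald's identity or a straightforward geometric-series argument — are routine.
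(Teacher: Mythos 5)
Your skeleton is the right one --- harmonic steps of size comparable to the current distance halve the distance with probability $\Theta(1/(\alpha H_{\phi-1}))$, giving $O(\alpha H_{\phi-1}\log\phi)$ for the gradient regime, plus a final direct-jump phase costing $O(\alpha\beta H_{\phi-1})$ --- and you correctly identify the central obstacle: accepted moves that improve $f$ but increase the distance to the optimum. However, you do not actually resolve that obstacle, and the resolutions you gesture at would not work. The observation that $f$ strictly decreases on every accepted step and is injective only shows that bad steps cannot recur forever; it gives no useful bound on how many occur (there could be $\Theta(\phi)$ of them). The alternative, a drift argument on $\log(d_t)$ where ``the $\alpha$ absorbs outward jumps,'' fails for large $\alpha$: a good step (probability $\Theta(1/H_{\phi-1})$) decreases $\log d_t$ by $\log 2$, but accepted outward moves can increase it by up to $\log\alpha$, so the one-step drift of $\log d_t$ need not point toward the optimum at all. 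You flag this as ``the part requiring the most care'' and defer it to ``a careful drift/phase argument,'' which is precisely the missing proof.

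The paper's proof closes this gap with one clean device: it tracks $d^*_t=\min_{t'\le t}d_{t'}$, the smallest distance to the optimum \emph{seen so far}, which is non-increasing by construction, so there are no outward excursions to pay for. Approximate unimodality is used only once, to show that whenever $d^*_t\ge\beta$ the current position satisfies $d_t\le\alpha d^*_t$ (any point at distance exceeding $\alpha d^*_t$ has worse fitness than the current one, since $f(d_t)\le f(d^*_t)$, and hence can never be accepted). The halving argument is then applied to $d^*_t$: the probability of jumping to some point within distance $\lfloor d^*_t/2\rfloor$ of the optimum is at least $\sum_{i=0}^{\lfloor d^*_t/2\rfloor}\frac{1}{2(d_t-i)H_{\phi-1}}\ge\frac{d^*_t}{4\alpha d^*_tH_{\phi-1}}=\frac{1}{4\alpha H_{\phi-1}}$, so $d^*_t$ halves in expected time $4\alpha H_{\phi-1}$ and drops below $2\beta$ within $4\alpha H_{\phi-1}\log\phi$ steps; from there a direct hit has probability at least $\frac{1}{2d_tH_{\phi-1}}\ge\frac{1}{4\alpha\beta H_{\phi-1}}$. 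Note also that this places the factor $\alpha$ differently from your sketch: you halve the \emph{current} distance and get a constant-fraction harmonic mass, whereas the correct target set is defined relative to $d^*_t$ while the walker sits as far as $\alpha d^*_t$ away, which is exactly where the $1/\alpha$ in the per-step success probability comes from.
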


\begin{corollary}
    In the setting of Theorem~\ref{the:approx_uni_harmonic_expected_opt_time},
    \begin{enumerate}[label=(\alph*)]
        \item{every unimodal parameter landscape yields a bound of $O(\log^2 \phi)$.}
        \item{for every parameter landscape, a general upper bound of $O(\phi \log \phi)$ applies.}
    \end{enumerate}
\end{corollary}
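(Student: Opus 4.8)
The plan is to obtain both bounds as immediate specialisations of Theorem~\ref{the:approx_uni_harmonic_expected_opt_time}. For each part I would choose the parameters $(\alpha,\beta)$ using one of the observations recorded just after Definition~\ref{def:approx_unimodal}, substitute them into the theorem's guarantee $4\alpha H_{\phi-1}\log(\phi) + 4\alpha\beta H_{\phi-1}$, and then collapse the expression using the standard estimate $H_{\phi-1} = O(\log\phi)$. No new argument about the algorithm itself is needed; the entire content is a correct identification of the relevant $(\alpha,\beta)$ pair together with routine asymptotic simplification.

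For part (a) I would invoke the remark that a unimodal landscape is exactly $(1,1)$-approximately unimodal, and hence apply the theorem with $\alpha = 1$ and $\beta = 1$. This yields the bound $4 H_{\phi-1}\log\phi + 4 H_{\phi-1}$. Since $H_{\phi-1} = O(\log\phi)$, the leading term is $O(\log^2\phi)$ while the additive $4 H_{\phi-1} = O(\log\phi)$ is dominated, giving the claimed $O(\log^2\phi)$.

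For part (b) I would instead use the observation that \emph{every} function on $\{1,\dots,m\}$ is $(1,m)$-approximately unimodal, where in the present single-parameter setting $m = \phi$. Applying the theorem with $\alpha = 1$ and $\beta = \phi$ gives $4 H_{\phi-1}\log\phi + 4\phi H_{\phi-1}$; again bounding $H_{\phi-1} = O(\log\phi)$, the first term is $O(\log^2\phi)$ and the second term $4\phi H_{\phi-1} = O(\phi\log\phi)$ dominates, establishing the general upper bound $O(\phi\log\phi)$. I do not expect any genuine obstacle here: both statements are mechanical consequences of the theorem once the correct $(\alpha,\beta)$ is fixed. The only point deserving a moment of care is confirming that each chosen $\beta$ actually satisfies the hypotheses — that unimodality forces the $(1,1)$-condition in (a), and that the boundary case $\beta = m$ holds vacuously for all landscapes in (b) — but both of these are already justified by the remarks following Definition~\ref{def:approx_unimodal} and can simply be cited.
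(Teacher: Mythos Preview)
Your proposal is correct and matches the paper's approach: the corollary is stated without a separate proof, being an immediate consequence of Theorem~\ref{the:approx_uni_harmonic_expected_opt_time} via exactly the substitutions $(\alpha,\beta)=(1,1)$ for part~(a) and $(\alpha,\beta)=(1,\phi)$ for part~(b), both of which are justified by the remarks following Definition~\ref{def:approx_unimodal}.
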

Hence ParamHS is far more efficient than the $\Omega(\phi)$ lower bound for general classes of tuners (Theorems~\ref{thm:lower_bound_tuning_time_local_search}--\ref{thm:lower_bound_tuning_time_ParamILS}) on approximately unimodal landscapes and is guaranteed never to be worse than default operators by more than a $\log \phi$ factor.
\begin{proof}[Proof of Theorem~\ref{the:approx_uni_harmonic_expected_opt_time}]
    Let~$f(i)$ describe the performance of the configuration with the $i$-th largest parameter value. Then~$f$ is $(\alpha, \beta)$-approximately unimodal and we are interested in the time required to locate its minimum.

    Let $d_t$ denote the current distance to the optimum and note that $d_0 \le \phi$. Let $d^*_t$ denote the smallest distance to the optimum seen so far, that is, $d^*_t = \min_{t' \le t} d_{t'}$. Note that $d^*_t$ is non-increasing over time. Since ParamHS does not accept any worsenings, $f(d_t) \le f(d^*_t)$.

If $d^*_t \ge \beta$ then by the approximate unimodality assumption, for all $j > \alpha d^*_t$, $f(j) > f(d^*_t) \ge f(d_t)$, that is, all points at distance larger than $\alpha d^*_t$ have a worse fitness than the current position and will never be visited.

Now assume that $d^*_t \ge 2\beta$. We estimate the expected time to reach a position with distance at most $\lfloor d^*_t/2 \rfloor$ to the optimum. This includes all points that have distance $i$ to the global optimum, for $0 \le i \le \lfloor d^*_t/2 \rfloor$, and distance $d_t - i$ to the current position. The probability of jumping to one of these positions is at least
\[
    \sum_{i=0}^{\lfloor d^*_t/2 \rfloor} \frac{1}{2(d_t - i)H_{\phi-1}}
    \ge \sum_{i=0}^{\lfloor d^*_t/2 \rfloor} \frac{1}{2d_tH_{\phi-1}}
    \ge \frac{d^*_t}{4d_tH_{\phi-1}} \ge \frac{d^*_t}{4\alpha d^*_tH_{\phi-1}}
    = \frac{1}{4\alpha H_{\phi-1}}.
\]
    Hence, the expected half time for $d^*_t$ is at most $4\alpha H_{\phi-1}$ and the expected time to reach $d^*_t < 2\beta$ is at most $4\alpha H_{\phi-1} \log \phi$.

Once $d^*_t < 2\beta$, the probability of jumping directly to the optimum is at least
$
    \frac{1}{2d_t H_{\phi-1}} \ge \frac{1}{2\alpha d^*_t H_{\phi-1}} \ge \frac{1}{4\alpha \beta H_{\phi-1}}
$
    and the expected time to reach the optimum is at most $4\alpha \beta H_{\phi-1}$.
Adding the above two times and using the well-known fact that $H_{\phi-1} = O(\log \phi)$ yields the claim.
\end{proof}


\section{Experimental Analysis}
\label{sec:experiments}

We have proved that, given some assumptions about the parameter landscape, it is beneficial to use the harmonic-step operator instead of the default operators used in ParamRLS and ParamILS. In this section, we verify experimentally that these theoretical results are meaningful beyond parameter landscapes assumed to be (approximately) unimodal. \par

We investigated the impact of using the harmonic-step operator on the time taken for ParamRLS and ParamILS to identify the optimal configuration (or in one case a set of near-optimal configurations) in different configuration scenarios. Note that ParamRLS using this operator is equivalent to ParamHS. We analysed the number of configuration comparisons (that is, calls to the \texttt{better()} procedure present in both ParamRLS and ParamILS) required for the configurators to identify the optimal mutation rate (the optimal value $\chi$ in the mutation rate $\chi/n$) for the \EA optimising {\scshape Ridge} and the \EA optimising {\scshape LeadingOnes} as in~\cite{paper:analysis_tuners_gmo} and identifying the optimal neighbourhood size~$k$ (the number of bits flipped during mutation) for RLS$_k$ optimising {\scshape OneMax} as in~\cite{paper:impact_cutoff_time}. Finally, we considered optimising two parameters of the SAT solver SAPS optimising MAX-SAT~\cite{paper:saps}, searching for one of the five best-performing configurations found during an exhaustive search of the parameter space. \par

In the first two configuration scenarios, with probability $1-2^{-\Omega{(n^{\varepsilon})}}$, the configurator can identify that a neighbouring parameter value is better, hence the landscape is unimodal ~\cite{paper:analysis_tuners_gmo} (see Figures~\ref{fig:ridge_fitnesses} and~\ref{fig:lo_fitnesses}). In such landscapes, we expect the harmonic-step operator to perform well. In the third scenario, the parameter landscape is \emph{not} unimodal (see Figure~\ref{fig:om_fitnesses}: $k=2c+1$ outperforms $k=2c$), but it is (2,1)\nobreakdash-approximately unimodal with respect to the expected fitness (as for all~$k$, the parameter value~$k$ outperforms all parameter values $k' > 2k$) both empirically (Figure~\ref{fig:om_fitnesses}) and theoretically~\cite{paper:opt_param_choices_precise_bba}. In the fourth scenario, the parameter landscape is more complex since we configure two parameters, but it still appears to be approximately unimodal (see Figure~\ref{fig:mean_fitnesses_saps}). \par

\subsection{Experimental Setup}

In all scenarios 
we measured the number of calls to the \texttt{better()} procedure 
 before the optimal configuration (or a set of near-optimal configurations in the scenario configuring SAPS) is first sampled. We varied the size of the parameter space to investigate how the performance of the mutation operators (i.e. $\ell$-step, random, and harmonic-step) depends on the size of the parameter space.

For ParamILS, the BasicILS variant was used. That is, each call to \texttt{better()} resulted in the two competing configurations both being run the same, set number of times. For each size of the parameter spaces, the experiment was repeated~200 times and the mean number of calls to \texttt{better()} was recorded. For the MAX-SAT scenario 500 repetitions were used to account for the increased complexity of the configuration scenario. The cutoff time~$\kappa$ (the number of iterations for which each configuration is executed for each run in a comparison) varied with the choice of problem class. A fitness-based performance metric was used, as recommended in \cite{paper:impact_cutoff_time,paper:analysis_tuners_gmo}, in which the winner of a comparison is the configuration which achieves the highest mean fitness in~$r$ runs each lasting~$\kappa$ iterations. In each run, 
both configurators were initialised uniformly at random. We set $R=0$ in ParamILS since preliminary experiments indicated that initial random sampling was harmful in the configuration scenarios considered here.

\paragraph{Benchmark functions} For \textsc{Ridge}, \textsc{LeadingOnes} and \textsc{OneMax}, we used $n=50$ and~1500 runs per configuration comparison (i.e. $r=1500$).
For {\scshape Ridge}, we used a cutoff time of $\kappa=2500$. The value of~$\ell$ in the $\ell$-step operator was set to $\ell=1$. The first parameter space that we considered was $\chi \in \{0.5, 1.0, \ldots, 4.5, 5.0\}$, where $\chi/n$ is the mutation rate and $\chi=1$ is optimal for \textsc{Ridge}~\cite{paper:analysis_tuners_gmo}. We increased the size of the parameter space by adding the next five largest configurations (each increasing by 0.5) until the parameter space $\{0.5, \ldots, 25.0\}$ was reached.
Following~\cite{paper:analysis_tuners_gmo}, for {\scshape Ridge}, the (1+1)~EA was initialised at the start of the ridge, in order to focus on the search on the ridge (as opposed to the initial approach to the ridge, for which the optimal mutation rate may be different from $1/n$).
\par

When configuring the mutation rate $\chi/n$ of the \EA for {\scshape LeadingOnes}, we initialised the individual u.a.r. and used $\kappa=2500$ and $\ell=1$. The size of the parameter space was increased in the same way as in the {\scshape Ridge} experiments, and the initial parameter space was $\chi \in \{0.6, 1.1, \ldots, 4.6, 5.1\}$ as the optimal value for~$\chi$ is approximately 1.6~\cite{paper:opt_mut_rates_1p1_LO,paper:analysis_tuners_gmo}. The final parameter space was $\{0.6,\ldots,25.1\}$ \par

When configuring the neighbourhood size of RLS$_k$ for {\scshape OneMax}, we initialised the individual u.a.r. and set $\kappa=200$. The initial parameter space was $\{1, 2, \ldots, 9, 10\}$, where $k=1$ is the optimal parameter~\cite{paper:impact_cutoff_time}, and the next five largest integers were added until $\{1,2,\ldots,49,50\}$ was reached. Since this parameter landscape is only approximately unimodal, we set $\ell=2$ (as recommended in \cite{paper:impact_cutoff_time}: $\ell=1$ would fail to reach the optimal value $k=1$ unless initialised there).

\paragraph{SAPS for MAX-SAT} We considered tuning two parameters of SAPS -- $\alpha$ and~$\rho$ -- for ten instances\footnote{Problem instances number 78, 535, 581, 582, 6593, 6965, 8669, 9659, 16905, 16079.} of the circuit-fuzz problem set (available in AClib~\cite{paper:aclib}). Due to the complexity of the MAX-SAT problem class it was no longer obvious which configurations can be considered optimal. Therefore we conducted an exhaustive search of the parameter space in order to identify configurations that perform well. We did so by running the validation procedure in ParamILS for each configuration with $\alpha \in \{\frac{16}{15}, \frac{17}{15}, \ldots, \frac{44}{15}, \frac{45}{15}\}$ and $\rho \in \{0, \frac{1}{15},\ldots, \frac{14}{15}, 1\}$. Each configuration was evaluated~2000 times on each of the ten considered circuit-fuzz problem instances. In each evaluation, the cutoff time was~$10,000$ iterations and the quality of a configuration was the number of satisfied clauses. We selected the set of the five best-performing configurations 
 to be the target. 

Since it was not feasible to compute the quality of a configuration each time it was evaluated in a tuner, we instead took the average fitness values generated during the initial evaluation of the parameter landscape to be the fitness of each configuration. As these runs were repeated many times we believe they provide an accurate approximation of the fitness values of the configurations. 

In this experiment, we kept the range of values of~$\rho$ as the set $\{0, \frac{1}{15},\ldots, \frac{14}{15}, 1\}$ and the value of the two other parameters of SAPS as $ps = 0.05$ and $wp = 0.01$ (their default values). We then increased the size of the set of possible values of~$\alpha$. The initial range for~$\alpha$ was the set~$\{\frac{16}{15}, \frac{17}{15}, \frac{18}{15}\}$, which contains all five best-performing configurations. We then generated larger parameter spaces by adding a new value to the set of values for~$\alpha$ until the set $\{\frac{16}{15}, \ldots, \frac{45}{15}\}$ was reached. \par

\subsection{Results}

\begin{figure}[tb]
\begin{center}
    \subfloat[\EA and {\scshape Ridge}, $\kappa=2500$\label{fig:ridge_fitnesses}] {
    \begin{tikzpicture}
        \begin{axis}[height=2.5cm, width=5cm, xlabel=$\chi$,every axis x label/.style={ at={(ticklabel* cs:1)}, anchor=west}]
            \addplot[draw=blue,mark=+,only marks, mark size = 0.5]coordinates{(0.5,-65.6382)(1.0,-69.4835)(1.5,-68.015)(2.0,-64.8303)(2.5,-61.3355)(3.0,-58.1768)(3.5,-55.7956)(4.0,-53.992)(4.5,-52.6689)(5.0,-51.7596)(5.5,-51.1286)(6.0,-50.7285)(6.5,-50.4465)(7.0,-50.2821)(7.5,-50.1674)(8.0,-50.1061)(8.5,-50.0599)(9.0,-50.0338)(9.5,-50.0196)(10.0,-50.0122)(10.5,-50.007)(11.0,-50.0048)(11.5,-50.0016)(12.0,-50.0017)(12.5,-50.0006)(13.0,-50.0)(13.5,-50.0007)(14.0,-50.0)(14.5,-50.0)(15.0,-50.0)(15.5,-50.0002)(16.0,-50.0)(16.5,-50.0)(17.0,-50.0)(17.5,-50.0)(18.0,-50.0)(18.5,-50.0)(19.0,-50.0)(19.5,-50.0)(20.0,-50.0)(20.5,-50.0)(21.0,-50.0)(21.5,-50.0)(22.0,-50.0)(22.5,-50.0)(23.0,-50.0)(23.5,-50.0)(24.0,-50.0)(24.5,-50.0)(25.0,-50.0)};
            \draw [dotted] (5,-50) -- (5,390);
        \end{axis}
    \end{tikzpicture}}
\subfloat[\EA and {\scshape LeadingOnes}, $\kappa=2500$\label{fig:lo_fitnesses}] {
    \begin{tikzpicture}
        \begin{axis}[height=2.5cm, width=5cm,xlabel=$\chi$,every axis x label/.style={ at={(ticklabel* cs:1)}, anchor=west}]
            \addplot[draw=blue,mark=+,only marks, mark size = 0.5]coordinates{(0.6,-43.8031)(1.1,-49.1514)(1.6,-49.5257)(2.1,-49.3183)(2.6,-48.5411)(3.1,-47.0689)(3.6,-45.0006)(4.1,-42.6335)(4.6,-40.353)(5.1,-38.2877)(5.6,-36.2911)(6.1,-34.5909)(6.6,-33.0349)(7.1,-31.4985)(7.6,-30.1433)(8.1,-28.9256)(8.6,-27.7937)(9.1,-26.7176)(9.6,-25.8113)(10.1,-24.8562)(10.6,-24.0585)(11.1,-23.2409)(11.6,-22.4811)(12.1,-21.7851)(12.6,-21.1878)(13.1,-20.5041)(13.6,-19.942)(14.1,-19.3505)(14.6,-18.8331)(15.1,-18.3346)(15.6,-17.8595)(16.1,-17.3993)(16.6,-16.9598)(17.1,-16.5378)(17.6,-16.1291)(18.1,-15.721)(18.6,-15.3767)(19.1,-15.0176)(19.6,-14.7088)(20.1,-14.361)(20.6,-14.0182)(21.1,-13.7103)(21.6,-13.4247)(22.1,-13.1594)(22.6,-12.8668)(23.1,-12.568)(23.6,-12.3104)(24.1,-12.0457)(24.6,-11.8127)(25.1,-11.5996)};
            \draw [dotted] (10,-25) -- (10,400);
        \end{axis}
    \end{tikzpicture} } \\
\subfloat[RLS$_k$ and {\scshape OneMax}, $\kappa=200$\label{fig:om_fitnesses}] {
    \begin{tikzpicture}
        \begin{axis}[height=2.5cm, width=5cm,xlabel=$k$,every axis x label/.style={ at={(ticklabel* cs:1)}, anchor=west}]
            \addplot[draw=blue,mark=+,only marks, mark size = 0.5]coordinates{(1.0,-49.4581)(2.0,-44.8734)(3.0,-46.0801)(4.0,-43.2829)(5.0,-43.7258)(6.0,-41.7525)(7.0,-41.9623)(8.0,-40.5272)(9.0,-40.5905)(10.0,-39.4296)(11.0,-39.454)(12.0,-38.5119)(13.0,-38.4951)(14.0,-37.6563)(15.0,-37.6458)(16.0,-36.9068)(17.0,-36.8509)(18.0,-36.2302)(19.0,-36.1593)(20.0,-35.5885)(21.0,-35.501)(22.0,-34.972)(23.0,-34.8697)(24.0,-34.4146)(25.0,-34.2722)(26.0,-33.8599)(27.0,-33.7146)(28.0,-33.3119)(29.0,-33.1818)(30.0,-32.7911)(31.0,-32.6354)(32.0,-32.2796)(33.0,-32.1043)(34.0,-31.761)(35.0,-31.5753)(36.0,-31.2665)(37.0,-31.0455)(38.0,-30.7272)(39.0,-30.5183)(40.0,-30.1741)(41.0,-29.9844)(42.0,-29.6605)(43.0,-29.4489)(44.0,-29.1131)(45.0,-28.832)(46.0,-28.4973)(47.0,-28.303)(48.0,-28.0448)(49.0,-27.9325)(50.0,-27.8145)};
            \draw [dotted] (0,-50) -- (0,340);
        \end{axis}
    \end{tikzpicture} }
\subfloat[SAPS and MAX-SAT, $\kappa=10000$\label{fig:mean_fitnesses_saps}] {
    \includegraphics[width=5.8cm]{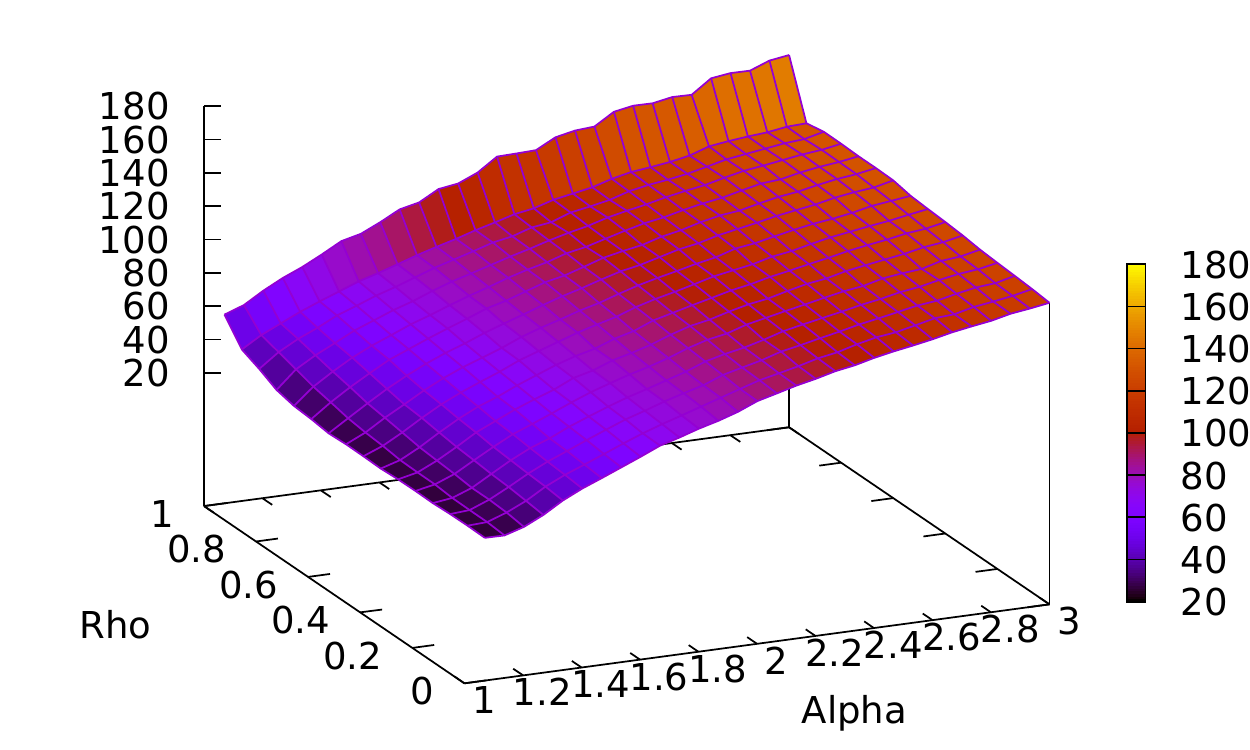}
    }
    \caption{(a),(b),(c): Mean fitness of the individual in the algorithms with $n=50$, averaged over~10,000 runs for each parameter value, multiplied by $-1$ to obtain a minimisation problem. The dotted line indicates the optimal configuration for each scenario. (d): The parameter landscape for SAPS in terms of~$\alpha$ and~$\rho$ computed for a set of ten SAT instances from the circuit-fuzz dataset. In all figures lower values are better.
    }
    \label{fig:mean_fitness_plots}
\end{center}
\end{figure}

\begin{figure}[p]
\begin{center}

    \subfloat[Configuring the \EA for {\scshape Ridge} with $\kappa=2500$ and $r=1500$.\label{fig:tuning_time_1p1ea_ridge_increasing_pspace_n50}] {
    \begin{tikzpicture}
        \begin{axis}[ymode=log, axis on top, axis y line*=left, xmin=10, xmax=50, ymin=4.64, ymax=270.56,width=9cm,height=2.2cm,xlabel=number of configurations, ylabel={calls to \texttt{better()}},ylabel style={align=center}]
            \addplot[draw=blue]coordinates{(5,2.5)(10,6.89)(15,11.825)(20,16.915)(25,23.32)(30,31.1)(35,65.11)(40,151.81)(45,212.8223350253807)(50,270.5608465608466)};
            \addplot[draw=black]coordinates{(5,3.655)(10,8.65)(15,12.035)(20,18.165)(25,23.215)(30,30.245)(35,30.05)(40,35.99)(45,46.43)(50,45.83)};
            \addplot[draw=green]coordinates{(5,2.24)(10,4.64)(15,7.325)(20,9.965)(25,12.35)(30,15.595)(35,15.415)(40,20.065)(45,21.275)(50,24.99)};
            \addplot[draw=red]coordinates{(5,2.56)(10,5.485)(15,7.61)(20,8.84)(25,9.725)(30,11.68)(35,12.83)(40,12.895)(45,14.885)(50,15.16)};
            \addplot[draw=green, line width=1pt, line cap=round, dash pattern=on 0pt off 2\pgflinewidth]coordinates{(5,1.99)(10,4.515)(15,7.565)(20,8.49)(25,11.29)(30,14.87)(35,16.93)(40,18.42)(45,24.265)(50,22.885)};
            \addplot[draw=red, line width=1pt, line cap=round, dash pattern=on 0pt off 2\pgflinewidth]coordinates{(5,1.915)(10,4.55)(15,6.14)(20,7.625)(25,9.21)(30,11.16)(35,10.265)(40,12.335)(45,13.09)(50,13.155)};
        \end{axis}
        \begin{axis}[axis on top, axis y line*=right,width=9cm,height=2.2cm,ylabel=effect size,axis x line=none,ymin=-1,ymax=1,xmin=10, xmax=50]
            \addplot[mark=x,only marks,draw=blue,fill=blue]coordinates{(15,0.265175)(20,0.4299)(25,0.437014925373)(30,0.443482587065)(35,0.435049751244)(40,0.627910447761)(45,0.556119571348)(50,0.606377053745)};
            \addplot[mark=x,only marks,draw=black,fill=black]coordinates{(10,0.1343)(15,0.127225)(20,0.3023)(25,0.284425)(30,0.354427860697)(35,0.344676616915)(40,0.415174129353)(45,0.450323383085)(50,0.49631840796)};
            \addplot[mark=x,only marks,draw=green,fill=green]coordinates{(20,0.1233)(25,0.1948)(30,0.271025)(35,0.120375)(40,0.354625)(45,0.29115)(50,0.364)};
            \addplot[mark=x,only marks,draw=orange,fill=orange]coordinates{(15,0.195613969951)(25,0.211009628475)(30,0.260562857355)(35,0.36335734264)(40,0.269201257395)(45,0.51382391525)(50,0.36162471226)};
        \end{axis}
    \end{tikzpicture}}

    \subfloat[Configuring the \EA for {\scshape LeadingOnes} with $\kappa=2500$ and $r=1500$.\label{fig:tuning_time_1p1ea_lo_increasing_pspace_n50}] {
    \begin{tikzpicture}
        \begin{axis}[ymode=log, axis on top, axis y line*=left, xmin=10,xmax=50, ymin=4.03,ymax=48.44, width=9cm,height=2.2cm,ylabel={calls to \texttt{better()}},ylabel style={align=center}]
            \addplot[draw=blue]coordinates{(5,2.01)(10,5.885)(15,11.085)(20,16.08)(25,20.925)(30,24.675)(35,27.6)(40,36.87)(45,37.03)(50,45.19)};
            \addplot[draw=black]coordinates{(5,3.43)(10,8.09)(15,12.695)(20,16.1)(25,21.96)(30,30.77)(35,33.48)(40,35.565)(45,40.625)(50,48.44)};
            \addplot[draw=green]coordinates{(5,2.32)(10,4.605)(15,6.85)(20,10.015)(25,13.755)(30,13.345)(35,17.6)(40,19.005)(45,21.46)(50,23.65)};
            \addplot[draw=red]coordinates{(5,2.515)(10,5.205)(15,6.92)(20,8.855)(25,10)(30,11.16)(35,12.21)(40,14.125)(45,14.255)(50,15.925)};
            \addplot[draw=green, line width=1pt, line cap=round, dash pattern=on 0pt off 2\pgflinewidth]coordinates{(5,1.75)(10,4.57)(15,7.01)(20,9.565)(25,12.655)(30,14.365)(35,18.665)(40,20.05)(45,19.415)(50,26.785)};
            \addplot[draw=red, line width=1pt, line cap=round, dash pattern=on 0pt off 2\pgflinewidth]coordinates{(5,1.635)(10,4.03)(15,6.955)(20,7.12)(25,9.08)(30,9.585)(35,8.98)(40,13.035)(45,12.945)(50,13.655)};
        \end{axis}
    \begin{axis}[axis y line*=right,width=9cm,height=2.2cm,ylabel=effect size,axis x line=none,ymin=-1,ymax=1,xmin=10, xmax=50]
        \addplot[mark=x,only marks,draw=blue,fill=blue]coordinates{(15,0.2676)(20,0.328009950249)(25,0.385646766169)(30,0.44725)(35,0.392711442786)(40,0.559353233831)(45,0.501293532338)(50,0.54485)};
        \addplot[mark=x,only marks,draw=black,fill=black]coordinates{(10,0.152575)(15,0.26025)(20,0.228631840796)(25,0.293675)(30,0.447910447761)(35,0.472587064677)(40,0.278134328358)(45,0.472587064677)(50,0.387114427861)};
        \addplot[mark=x,only marks,draw=green,fill=green]coordinates{(25,0.302925)(30,0.118725)(35,0.28875)(40,0.23215)(45,0.278525)(50,0.285625)};
        \addplot[mark=x,only marks,draw=orange,fill=orange]coordinates{(10,0.138189648771)(20,0.242196975322)(25,0.286874087275)(30,0.290042325685)(35,0.567686567164)(40,0.366872107126)(45,0.243583079627)(50,0.551123479695)};
    \end{axis}
    \end{tikzpicture}}

    \subfloat[Configuring RLS$_k$ for {\scshape OneMax} with $\kappa=200$ and $r=1500$.\label{fig:tuning_time_rlsk_om_increasing_pspace_n50}] {
    \begin{tikzpicture}
        \begin{axis}[ymode=log, axis on top, axis y line*=left, xmin=10, xmax=50, ymin=4.41,ymax=47.26,width=9cm,height=2.2cm, ylabel={calls to \texttt{better()}},ylabel style={align=center}]
            \addplot[draw=blue]coordinates{(5,3.845)(10,9.665)(15,12.435)(20,17.635)(25,19.965)(30,24.875)(35,27.815)(40,32.09)(45,34.0)(50,37.37)};
            \addplot[draw=red]coordinates{(5,4.085)(10,7.285)(15,10.705)(20,11.235)(25,13.485)(30,16.415)(35,16.09)(40,17.555)(45,17.37)(50,17.62)};
            \addplot[draw=black]coordinates{(5,3)(10,8.34)(15,12.895)(20,18.91)(25,24.655)(30,25.375)(35,36.095)(40,40.96)(45,40.02)(50,47.26)};
            \addplot[draw=green]coordinates{(5,2.23)(10,5.04)(15,7.58)(20,8.745)(25,12.33)(30,14.62)(35,18.76)(40,20.405)(45,22.585)(50,25.795)};
            \addplot[draw=green, line width=1pt, line cap=round, dash pattern=on 0pt off 2\pgflinewidth]coordinates{(5,2.16)(10,4.41)(15,6.488)(20,9.405)(25,11.74)(30,13.877551020408163)(35,16.73)(40,19.18)(45,21.59)(50,23.94)};
            \addplot[draw=red, line width=1pt, line cap=round, dash pattern=on 0pt off 2\pgflinewidth]coordinates{(5,2.455)(10,5.31)(15,8.04)(20,9.31)(25,10.985)(30,11.28)(35,12.91)(40,13.49)(45,15.32)(50,15.96)};
        \end{axis}
        \begin{axis}[axis on top, axis y line*=right,width=9cm,height=2.2cm,ylabel=effect size,axis x line=none,ymin=-1,ymax=1,xmin=10, xmax=50]
            \addplot[mark=x,only marks,draw=blue,fill=blue]coordinates{(10,0.218675)(20,0.31575)(25,0.28305)(30,0.33735)(35,0.409475)(40,0.471675)(45,0.511990049751)(50,0.550945273632)};
            \addplot[mark=x,only marks,draw=black,fill=black]coordinates{(20,0.152225)(25,0.251325)(30,0.146475)(35,0.330375)(40,0.377775)(45,0.32455)(50,0.408275)};
            \addplot[mark=x,only marks,draw=green,fill=green]coordinates{(35,0.19765)(40,0.184375)(45,0.2605)(50,0.349325)};
            \addplot[mark=x,only marks,draw=orange,fill=orange]coordinates{(30,0.195625)(35,0.22765)(40,0.284575)(45,0.254925)(50,0.272725)};
        \end{axis}
    \end{tikzpicture}}

    \subfloat[Configuring the~$\alpha$ and~$\rho$ parameters of SAPS.\label{fig:tuning_time_saps_cached_runs}] {
    \begin{tikzpicture}
        \begin{axis}[ymode=log, axis on top, axis y line*=left, xmin=48, xmax=464, ymin=5.304,ymax=47.02,width=9cm,height=2.2cm,ylabel={calls to \texttt{better()}},ylabel style={align=center}]
            \addplot[draw=blue]coordinates{(48,8.058)(64,9.014)(80,9.848)(96,11.91)(112,13.426)(128,16.094)(144,16.532)(160,18.06)(176,18.936)(192,20.976)(208,23.578)(224,24.32)(240,24.804)(256,26.994)(272,29.61)(288,29.21)(304,31.332)(320,31.71)(336,34.462)(352,35.042)(368,37.866)(384,40.014)(400,39.87)(416,41.442)(432,43.986)(448,45.59)(464,47.022)(480,46.294)};
            \addplot[draw=red]coordinates{(48,7.536)(64,8.074)(80,8.892)(96,9.782)(112,10.87)(128,10.894)(144,12.322)(160,12.346)(176,13.088)(192,14.366)(208,14.134)(224,14.092)(240,15.54)(256,16.11)(272,16.142)(288,17.282)(304,17.4)(320,18.214)(336,17.796)(352,19.85)(368,19.314)(384,20.252)(400,19.64)(416,20.52)(432,20.78)(448,21.59)(464,21.962)(480,21.72)};
            \addplot[draw=black]coordinates{(48,10.624)(64,10.44)(80,11.964)(96,12.406)(112,13.71)(128,13.508)(144,15.788)(160,16.498)(176,18.182)(192,17.148)(208,18.644)(224,18.784)(240,20.428)(256,23.834)(272,21.89)(288,24.322)(304,23.854)(320,25.498)(336,27.182)(352,26.988)(368,28.672)(384,30.422)(400,30.084)(416,32.588)(432,34.22)(448,35.074)(464,37.032)(480,34.304)};
            \addplot[draw=red, line width=1pt, line cap=round, dash pattern=on 0pt off 2\pgflinewidth]coordinates{(48,5.304)(64,6.452)(80,7.478)(96,8.352)(112,9.006)(128,9.498)(144,10.836)(160,10.612)(176,11.684)(192,12.076)(208,13.302)(224,13.606)(240,13.998)(256,14.42)(272,14.62)(288,14.902)(304,15.494)(320,15.678)(336,16.876)(352,17.204)(368,17.868)(384,17.574)(400,19.058)(416,18.87)(432,17.812)(448,19.43)(464,19.85)(480,20.048)};
            \addplot[draw=green, line width=1pt, line cap=round, dash pattern=on 0pt off 2\pgflinewidth]coordinates{(48,6.17)(64,6.91)(80,8.702)(96,9.372)(112,10.516)(128,10.642)(144,11.702)(160,12.216)(176,13.656)(192,13.804)(208,14.818)(224,14.64)(240,15.996)(256,16.392)(272,17.176)(288,17.44)(304,18.07)(320,18.238)(336,19.184)(352,20.604)(368,19.842)(384,20.956)(400,21.852)(416,21.908)(432,21.452)(448,24.284)(464,23.932)(480,23.43)};
        \end{axis}
        \begin{axis}[axis on top, axis y line*=right,width=9cm,height=2.2cm,ylabel=effect size,axis x line=none,ymin=-1,ymax=1,xmin=48, xmax=464]
            \addplot[mark=x,only marks,draw=blue,fill=blue]coordinates{(96,0.134816)(112,0.13998)(128,0.325332)(144,0.231884)(160,0.29828)(176,0.272344)(192,0.317736)(208,0.39606)(224,0.414488)(240,0.378152)(256,0.420548)(272,0.486256)(288,0.394552)(304,0.441092)(320,0.394588)(336,0.490116)(352,0.446596)(368,0.483716)(384,0.482124)(400,0.498696)(416,0.502568)(432,0.53976)(448,0.546484)(464,0.541224)(480,0.520244)};
            \addplot[mark=x,only marks,draw=black,fill=black]coordinates{(144,0.080172)(160,0.10152)(176,0.151536)(208,0.091564)(224,0.122632)(240,0.106016)(256,0.2162)(272,0.126676)(288,0.156288)(304,0.134028)(320,0.14992)(336,0.20188)(352,0.146932)(368,0.185564)(384,0.172608)(400,0.222336)(416,0.226876)(432,0.213948)(448,0.233324)(464,0.260784)(480,0.217548)};
            \addplot[mark=x,only marks,draw=orange,fill=orange]coordinates{(48,0.093912)(80,0.111128)(96,0.0797)(112,0.092988)(176,0.10232)(240,0.079612)(256,0.081512)(272,0.116496)(304,0.078932)(352,0.111588)(384,0.097136)(416,0.080364)(432,0.098828)(448,0.142132)(464,0.115716)(480,0.089864)};
        \end{axis}
    \end{tikzpicture}}

\caption{Mean number of calls to \texttt{better()} before sampling the optimal configuration. Green lines indicate the random search operator (without replacement), black lines indicate the random search operator (with replacement), blue lines indicate the $\ell$-step operator, and red lines indicate the harmonic-step operator. Solid lines correspond to ParamRLS and dotted lines to ParamILS. Crosses show effect size of difference at points where statistically significant for ParamHS versus: $\ell$-step (blue); random (without replacement) (green); random (with replacement) (black); and harmonic-step ParamILS vs. default ParamILS (orange).}
\label{fig:tuning_results}
\end{center}
\end{figure}
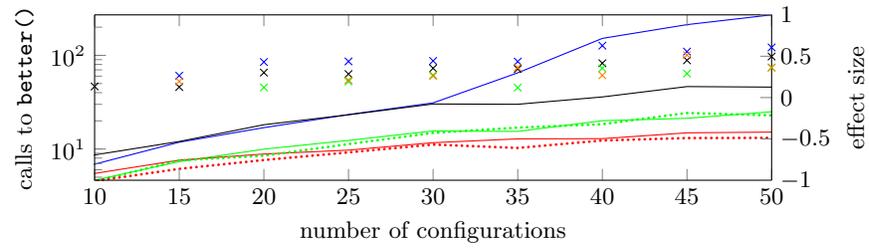
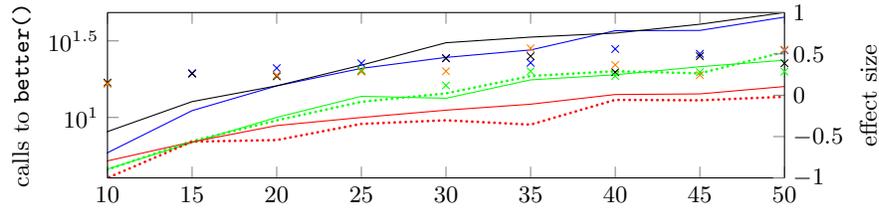
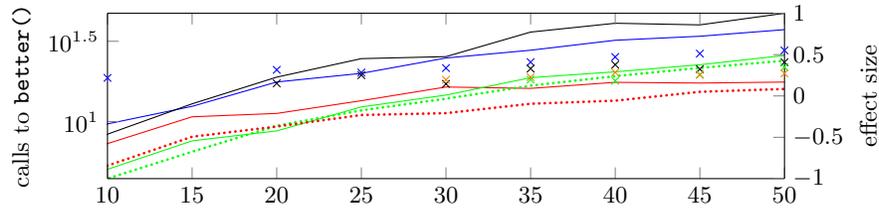
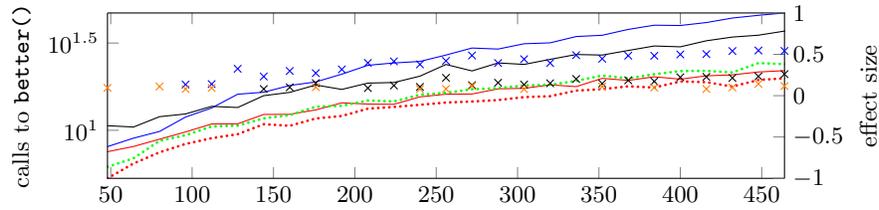
\afterpage{\clearpage}

The results from configuring benchmark functions are shown in Figures~\ref{fig:tuning_time_1p1ea_ridge_increasing_pspace_n50},~\ref{fig:tuning_time_1p1ea_lo_increasing_pspace_n50}, and~\ref{fig:tuning_time_rlsk_om_increasing_pspace_n50}. Green lines indicate the random search operator (without replacement), black lines indicate the random search operator (with replacement), blue lines indicate the $\ell$-step operator, and red lines indicate the harmonic-step operator. Solid lines correspond to ParamRLS and dotted lines to ParamILS. \par

In each configuration scenario, and for both configurators, the harmonic-step operator located the optimal configuration faster than both the $\ell$-step and random operators. For both configurators, the polylogarithmic growth of the time taken to locate the optimal configuration of the harmonic-step operator can be seen, compared to the linear growth of the time taken by the $\ell$-step and random local search operators. The difference between the operators is more pronounced when there is a plateau of neighbouring configurations all exhibiting the same performance (as in {\scshape Ridge}). We also verified 
that these improvements in performance occur also if few runs per comparison are used. \par

Similar benefits from using the harmonic-step operator can be seen in the results for configuring SAPS for MAX-SAT. Figure~\ref{fig:tuning_time_saps_cached_runs} shows that it is faster to locate a near-optimal configuration for SAPS when using the harmonic step operator than when using the other operators. \par


Figure~\ref{fig:tuning_results} also shows crosses where the difference between the performance of the harmonic-step operator and the other 
operators is statistically significant at a significance level of~0.95 (according to a two-tailed Mann-Whitney~U test~\cite{paper:mann_whitney_u_test}). Their position reflects the effect size (in terms of Cliff's delta~\cite{paper:cliff_delta}) of this comparison (values closer to~1 indicate a larger difference). Orange crosses show the difference between ParamILS using harmonic-step and that using random (without replacement). The differences between ParamHS and ParamRLS using $\ell$-step, random (without replacement) and random (with replacement) are shown by blue, green, and black crosses, respectively. In every configuration scenario, for the larger parameter space sizes almost all comparisons with all other operators were statistically significant.  \par

\section{Conclusions}

{\it Fast} mutation operators, that aim to balance the number of large and small mutations, are gaining momentum in evolutionary computation~\cite{DoerrGecco2017,CorusOlivetoYazdaniFASTAIS20018,CorusOlivetoYazdani2020TCS,CorusOlivetoYazdaniAIJ2019}. Concerning algorithm configuration
we demonstrated 
that ParamRLS and ParamILS benefit from replacing their default mutation operators with one that uses a harmonic distribution.
We proved considerable asymptotic speed-ups for smooth unimodal and approximately unimodal (i.e., rugged) parameter landscapes, while in the worst case (e.g., for deceptive landscapes) the proposed modification may only slow down the algorithm by at most logarithmic factor.
We verified experimentally that this speed-up occurs in practice 
 for benchmark parameter landscapes that are known to be unimodal and approximately unimodal, as well as for tuning a MAX-SAT solver for a well-studied benchmark set. Indeed other recent experimental work has suggested that the search landscape of algorithm configurations may be simpler than expected, often being unimodal or even convex~\cite{paper:PSO_param_landscape_analysis,paper:algo_config_landscapes_benign}.
We believe that this is the first work that has rigorously shown how to provably achieve faster algorithm configurators by exploiting the envisaged parameter landscape, while being only slightly slower if it was to be considerably different.
Future theoretical work should estimate the performance of the harmonic mutation operator on larger parameter configuration problem classes, while empirical work should assess the performance of the operator for more sophisticated configurators operating in real-world configuration scenarios.

\paragraph{Acknowledgements}
This work was supported by the EPSRC 
(EP/M004252/1).

\bibliographystyle{plain}
\bibliography{bibliography}

\end{document}